\definecolor{gray1}{rgb}{0.8,0.8,0.8}
\definecolor{gray2}{rgb}{0.95,0.95,0.95}
\newtheorem{Th}{Theorem}[section]
\newtheorem{Lem}[Th]{Lemma}
\newcommand{\RE}{\,{\rm Re}}
\newcommand{\IM}{\,{\rm Im}}
\newcommand{\ARG}{\,{\rm Arg}}
\newcommand{\ETAL}{{\it et al.}}
\newcommand{\IE}{{\it i.e.,}}
\newcommand{\figref}[1]{Fig. \ref{#1}}
\renewcommand{\eqref}[1]{Eq. (\ref{#1})}
\newcommand{\tableref}[1]{Table \ref{#1}}
\def\anglepoint{anchor point}
\def\anglepoints{anchor points}
\def\AnglePoints{Anchor Points}
\newcommand\commentout[1]{}
\begin{document}

\title{Perfect Fingerprint Orientation Fields \\by Locally Adaptive Global Models}


\author{Carsten Gottschlich\thanks{Institute for Mathematical Stochastics,
University of G\"ottingen,
Goldschmidtstr. 7, 37077 G\"ottingen, Germany.
Email: gottschlich@math.uni-goettingen.de},%
\ Benjamin Tams\thanks{Institute for Mathematical Stochastics,
University of G\"ottingen,
Goldschmidtstr. 7, 37077 G\"ottingen, Germany.
Email: btams@math.uni-goettingen.de}, and%
\ Stephan Huckemann\thanks{Felix-Bernstein-Institute 
for Mathematical Statistics in the Biosciences,
University of G\"ottingen,
Goldschmidtstr. 7, 37077 G\"ottingen, Germany.
Email: huckeman@math.uni-goettingen.de}}

\maketitle


\begin{abstract}

Fingerprint recognition is widely used for verification and identification in many commercial, governmental and forensic applications. The orientation field (OF) plays an important role at various processing stages in fingerprint recognition systems. OFs are used for image enhancement, fingerprint alignment, for fingerprint liveness detection, fingerprint alteration detection and fingerprint matching. In this paper, a novel approach is presented to globally model an OF combined with locally adaptive methods. We show that this model adapts perfectly to the 'true OF' in the limit. This perfect OF is described by a small number of parameters with straightforward geometric interpretation. Applications are manifold: Quick expert marking of very poor quality (for instance latent) OFs, high fidelity low parameter OF compression and a direct road to ground truth OFs markings for large databases, say. In this contribution we describe an algorithm to perfectly estimate OF  parameters automatically or semi-automatically, depending on image quality, and we establish the main underlying claim of high fidelity low parameter OF compression.

\end{abstract}

\section*{Keywords}

Fingerprint recognition, orientation field estimation, orientation field models,
orientation field compression, orientation field marking,
latent fingerprint recognition, fingerprint image enhancement,
fingerprint matching

\section{Introduction} 

The orientation field (OF) is a crucial ingredient 
for most fingerprint recognition systems \cite{HandbookFingerprintRecognition2009}.
An OF is an image (or matrix) which encodes at each pixel 
of the fingerprint foreground \cite{ThaiHuckemannGottschlich2016}
the orientation $o(x,y) \in [0, 180[$ degrees (or $o(x,y) \in [0, \pi[$ in radians)
of a tangent to the ridge and valley flow at location $(x, y)$.
Modelling and estimating OFs
is a fundamental task for automatic processing of fingerprints.

We introduce a locally adaptive global model called the 
\textit{extended quadratic differential} (XQD) model.
We show that XQDs can model the OF of a fingerprint perfectly in the limit.
The major advantage of the XQD model lies in its small number of parameters, each of which has
a simple and obvious geometric meaning.

OFs have many important areas of application
at various stages of processing fingerprints.
In the following part of this section, we discuss some of the most relevant of these applications.
The rest of this manuscript is organized as follows. 
In Section \ref{sec:ofliterature}, we review related work from the literature
for estimating, modelling and marking OFs of fingerprints.
In Section \ref{sec:XQDM}, we describe the novel XQD model.
In Section \ref{convergence}, we prove that the XQD model adapts perfectly 
to the OF of a real fingerprint in the limit.
In Section \ref{sec:results}, we present practical results 
for compressing real fingerprint OFs by XQD models.
We conclude in Section \ref{sec:conclusion} with a discussion 
and we point out topics for future work.

\subsection{Image Enhancement} 

Most systems for fingerprint verification and identification 
are based on minutiae templates.
Automatic extraction of minutiae from fingerprints
can be a very challenging task for images of low and very low quality.
Quality loss of fingerprint images acquired on optical scanners can occur
if a finger is too dry, or too wet, or contains scars.
Putting a finger with too much or too little pressure on sensor 
can have similar negative effects on the image quality.

Poor image quality can cause a minutiae extraction module
to miss some true minutiae and to introduce some spurious minutiae.
The goal of fingerprint image enhancement is to avoid these two types of errors
by improving the image quality prior to minutiae extraction. 
The most effective approach for fingerprint image enhancement is contextual filtering
and the most important type of local context is the OF.

For example, oriented diffusion filtering \cite{GottschlichSchoenlieb2012} 
uses only the OF to perform anisotropic smoothing along the ridge and valley flow. 
Curved regions \cite{Gottschlich2012} are computed based on the OF 
and first, they are used for estimating the local ridge frequency 
and subsequently, OF and ridge frequency estimates are joint inputs
for curved Gabor filtering \cite{Gottschlich2012}.
These two methods estimate the local context and perform filtering in the spatial domain.
Alternatively, methods for contextual filtering of fingerprints can also operate in the Fourier domain ,
see e.g. methods proposed by Chikkerur \ETAL \cite{ChikkerurCartwrightGovindaraju2007}
and by Bartunek \ETAL \cite{BartunekNilssonSallbergClaesson2013}.
A hybrid approach with processing steps in the spatial and Fourier domain 
has been suggested by Ghafoor \ETAL \cite{GhafoorTajAhmadJafri2014}.

\subsection{Liveness Detection} 

Software-based fingerprint liveness detection strives to classify 
an input fingerprint as belonging to one of two classes: 
An image of an alive, real finger or an image of a fake or spoof finger 
made from artificial material like wood glue, gelatine or silicone.
Developing countermeasures against spoof attacks is a very active research area.
Two methods apply the OF to compute invariant descriptors:
For histograms of invariant gradients (HIG) \cite{GottschlichMarascoYangCukic2014},
the gradient direction at each pixel is normalized relative to the local orientation.
Convolution comparison patterns (CCP) \cite{Gottschlich2016} 
are obtained from small image patches. To that end, rotation-invariant patches are computed 
by locally rotating each window according to the local orientation at that pixel.

\subsection{Alteration Detection} 

Fingerprint alteration is another type of presentation attack \cite{SousedikBusch2014}
in which the attacker has the goal of avoiding identification 
(e.g. attempting to not being found in a watchlist search during border crossing
or not being identified in a forensic investigation).
Altered Fingerprints often have a disturbed OF. 
Therefore it is not surprising that in recent comparisons 
of features for alteration detection \cite{GottschlichMikaelyanOlsenBigunBusch2015},
some of the most effective features are related to the OF. 
In a nutshell, DOFTS \cite{BigunMikaelyan2015} and OFA \cite{YoonFengJain2012} 
are based on the difference between an estimated OF and a smoother version of it,
COH \cite{GottschlichSchoenlieb2012} relies on the coherence of gradients 
and SPDA \cite{EllingsgaardSousedikBusch2014} makes use of the fact 
that alterations tend to introduce additional singularities into a fingerprint.

\subsection{Matching} 

Orientation descriptors have been proposed by Tico and Kuosmanen \cite{TicoKuosmanen2003}
for computing the similarity between two minutiae from two templates.
These local similarities are aggregated into a global score
which summarizes the similarity between both templates.

Improvements of fingerprint recognition performance have been observed 
by using differences between two aligned OFs for score revaluation \cite{Gottschlich2010PhD}:
First, two minutiae templates are matched 
and the output is a global similarity score and a minutiae pairing.
Second, the corresponding OFs are aligned, based on the paired minutiae
and, the similarity between the OFs is evaluated. 
On the one hand, if both OFs fit well together, the aligned OFs confirm the minutiae pairing and the global score is increased.
On the other hand, if major discrepancies between the aligned OFs are observed, 
this is considered as an indication of a potential impostor recognition attempt and the score is decreased accordingly.

\subsection{Alignment}

OFs are used for fingerprint alignment (also known as registration),
i.e. finding a global rotation and translation of one OF with respect to other 
which is obtained by optimizing a cost function \cite{YagerAmin2004}.
Krish \ETAL \cite{KrishFierrezRamosOrtegagarciaBigun2015} considered in their work
the alignment of partial fingerprints from fingermarks to 'full' fingerprints,
so called 'rolled' fingerprints 
which are acquired with the help of e.g. a	police officer who rolls the finger of subject 
to capture the full surface from nail to nail.
Similar to above described score revaluation \cite{Gottschlich2010PhD}, 
they found that OF alignment improves the recognition performance, 
in their case, the rank-1 identification rate.

Tams \cite{Tams2013} studied the problem of absolute pre-alignment
of a single fingerprint in the context of fingerprint-based cryptosystems \cite{Tams2012PhD,Tams2016}
and suggested an OF based method.

\subsection{Classification and Indexing} 

The goal of classification and indexing
is to speed up fingerprint identification (1 to N comparisons, 
where N can be in the magnitude of millions for forensic databases, 
see Chapter 5 in \cite{HandbookFingerprintRecognition2009}).
For example, the class tented arch is observed in about 3\% of all fingerprints \cite{HandbookFingerprintRecognition2009}.
Hence, if a query fingerprint belongs to the class tented arch, 
then the search space can reduced by 97\%. 
The majority of approaches for classification and indexing relies on OFs, e.g. 
Cappelli \ETAL \cite{CappelliLuminiMaioMaltoni1999}
proposed a method for fingerprint classification by directional image partitioning.
A recent survey by Galar \ETAL \cite{GalarClassificationI}
lists 128 references and most of them use the OF (or its singular points) 
for classification.

\subsection{Synthetic Fingerprint Generation} 

The generation of artificial fingerprint images has the advantage 
that it is possible to create arbitrarily large databases for research purposes 
e.g. of a million or a billion fingerprints at virtually no cost and without legal constraints.
Methods for producing synthetic fingerprints include 
\cite{CappelliErolMaioMaltoni2000,AraqueEtAl2002,KueckenChampod2013,ImdahlHuckemannGottschlich2015}.
A detailed discussion of approaches for constructing and reconstructing fingerprint images 
can be found in \cite{GottschlichHuckemann2014}.

All methods have in common that they require an OF for the image creation process.
The methods by Cappelli \textit{et al.} \cite{CappelliErolMaioMaltoni2000}
and by Araque  \textit{et al.} \cite{AraqueEtAl2002}
rely on the global OF model by Vizcaya and Gerhardt \cite{VizcayaGerhardt1996}.
In contrast, the realistic fingerprint creator (RFC) \cite{ImdahlHuckemannGottschlich2015}
uses OF estimations by a combination of gradient-based and line sensor methods \cite{GottschlichMihailescuMunk2009}
from a database of real fingerprints.

\subsection{Separation of Overlapping Fingerprints} 

During a forensic investigation, it can occur that traces at a crime scene are detected
where two or more latent fingerprints overlap on a surface. The task is to separate these fingerprints,
so that the separated single fingerprints can individually be utilized for identification.
Several research groups have addressed this problem in their work,
and the key to the solution are in each work the OFs,
see e.g. \cite{QianSchottZhengDittmann2014,FengShiZhou2012,ZhaoJain2012}.

A different forensic problem studied by Hildebrandt and Dittmann \cite{HildebrandtDittmann2015}
is latent fingerprint forgery detection. 
For this application one may well compute rotationally invariant features 
such as HIG \cite{GottschlichMarascoYangCukic2014} or CCP \cite{Gottschlich2016} 
by taking the orientation flow at the latent fingerprint into account.

\section{Estimation, Modelling and Marking of Orientation Fields} \label{sec:ofliterature}

Considering the importance of the OF, 
it is no surprise that a large body of literature is treating 
the topic of automatic OF estimation. 
A classic approach is to estimate the OF 
by some form of averaging (squared) image gradients (computed e.g. using the Sobel filter)
or symmetry features, see e.g. \cite{BigunGranlund1987,Bigun1988,BazenGerez2002,Bigun2006}.
However, this works only for good quality images. 
Further approaches include complex 2D energy operators \cite{Larkin2005}.
For dealing also with medium and low-quality fingerprint images, 
the line sensor method \cite{GottschlichMihailescuMunk2009,GottschlichMihailescuMunk2009ISPA}
was developed which recently has been adapted to detect the oriented filaments
in microscopy images \cite{EltznerWollnikGottschlichHuckemannRehfeldt2015}.
A dictionary based method \cite{FengZhouJain2013}
has been proposed for estimating the OF in latent fingerprints.
Many additional references can be found in \cite{HuckemannHotzMunk2008,GottschlichMihailescuMunk2009,TurroniMaltoniCappelliMaio2011}
and Chapter 3 of \cite{HandbookFingerprintRecognition2009}.

\subsection{Global Models}

The zero-pole model has been introduced by Sherlock and Monro in 1993 \cite{SherlockMonro1993}.
The flow fields generated by the zero-pole model resemble in some generality 
OFs of fingerprints, 
however they deviate significantly from OFs of a real finger.
Vizcaya and Gerhardt improved the simple zero-pole model in 1996 \cite{VizcayaGerhardt1996}
by suggesting an additional nonlinear bending scheme to better fit the OF generated by their model
to real OFs.
A global model based on quadratic differentials (QD) has been proposed
by Huckemann \textit{et al.} in 2008 \cite{HuckemannHotzMunk2008}.
The zero-pole model is a special case of this more general model
which has five geometrically interpretable parameters.
The QD model better fits real OFs especially for the fingerprint of the type arch.
Further global models include the work by Ram \ETAL \cite{RamBischofBirchbauer2010}
who apply Legendre polynomials for OF modelling.

\subsection{Manually Marking of Orientation Fields}

There are two main motivations for manually marking information in fingerprints.
The first is the creation of ground truth information which can be used 
for evaluating the performance of human experts and algorithms 
regarding the estimation or extraction of the target information.
And the second use case is the labeling for (semi)automated retrival of information
such as the foreground region, singular points, orientations 
or minutiae for fingerprint images which are too difficult for automatic processing 
by current state-of-the-art automatic fingerprint identification systems (AFIS) software. 
Forensic examiners mark such information in latent fingerprints 
to identify suspects in criminal investigations.
Advancements in latent fingerprint recognition have the goal 
of minimizing the time and effort required by human experts for successful identifications.

\subsubsection{Evaluating Orientation Field Estimation Performance}

In order to compare the performance of different OF estimation methods,
1782 orientations at specific locations in various fingerprints in have been manually marked by Gottschlich \ETAL \ in \cite{GottschlichMihailescuMunk2009} 
with a focus on low-quality regions affected by noise.
Cappelli \ETAL \cite{CappelliMaioMaltoni2009} addressed the problem of enhancing very low-quality fingerprints 
and suggested to manually mark the OFs used for contextual filtering.
They proposed to mark local orientations, compute the Delaunay triangulation
and interpolate the orientation at unmarked pixel locations inside a triangle from 
the marked orientations at the three vertices of triangle.
A disadvantage of this approach is that a large number of small triangles is required to approximate
the true orientation in highly curved regions around singular points. 
Cappelli \ETAL \cite{CappelliMaltoniTurroni2010} 
and 
Turroni \ETAL \cite{TurroniMaltoniCappelliMaio2011} created a ground truth benchmark called FOE 
following the same marking strategy (10 good and 50 bad quality prints). 
They compared the OF estimation performance of several algorithms from literature on this benchmark.
The FOE benchmark has recently also been used 
for evaluating the performance of methods 
which reconstruct OFs from minutiae \cite{OehlmannHuckemannGottschlich2015}.
In our work, OF compression results using the FOE benchmark are reported in Section \ref{sec:results}. 
We note that XQD models can be used as an alternate interpolation method 
not suffering from the need of large numbers of support points at high curvature near singularities.

\subsubsection{Latent Fingerprint Recognition}

Latent fingerprint recognition is still considered to be a difficult problem.
The level of noise for some fingermarks from crime scenes can be high  
and depending the surface from which fingermarks are lifted (or directly photographed),
a complex background can make the recognition task far more difficult
in comparison with the processing fingerprints captured by a fingerprint sensor. 
Typical first steps, among them fingerprint segmentation 
\cite{ThaiHuckemannGottschlich2016,ThaiGottschlich2016G3PD} 
and OF estimation, are challenging.
Recently, a novel image decomposition technique called DG3PD has been introduced 
which can better cope with these challenging images, see Figures 9 and 10 in \cite{ThaiGottschlich2016DG3PD}.

The goal of fully automatic latent fingerprint identification has not yet been achieved.
Even state-of-the-art commercial latent identification software fails for a considerable amount of images
and information still has to be manually marked by forensic experts in these cases.
E.g. in a work by Yoon \ETAL \cite{YoonFengJain2010}, information about the region-of-interest (ROI), 
the location of singular points and the orientation at some sparse locations is still assumed to be manually marked.
In the light of these problems, a subordinate target is to minimize the time and effort 
required by a human experts and the XQD model proposed in our work approach can be instrumental in achieving this.

\subsection{Compression of Orientation Fields and Fingerprint Images}

Forensic and governmental databases can contain millions of fingerprint images. 
Storing large volumes of data efficiently is a key issue 
which can be addressed by image compression \cite{Salomon2007Compression}.
Th\"{a}rn\r{a} \ETAL \cite{ThaernaNilssonBigun2003}
suggested to utilize the OF for improving lossless fingerprint image compression.
More specifically, they suggest to increase redundancy by scanning pixels along the orientation,
instead of standard procedures like horizontal (row by row) or vertical scanning of images.
Larkin and Fletcher \cite{LarkinFletcher2007} proposed a method for lossy fingerprint image compression
by decomposing an image into four elemental images which are highly compressible.
One of these four images, called the continuous phase, can be converted into an OF 
and vice versa.
Both approaches by Th\"{a}rn\r{a} \ETAL \cite{ThaernaNilssonBigun2003}
and by Larkin and Fletcher \cite{LarkinFletcher2007} 
can profit from improvements of the OF compression by our XQD models. 
If in an application e.g. by a law enforcement agency, 
fingerprint images and their minutiae templates are stored together, 
an straightforward idea would be to reconstruct the OF from the minutiae template. 
However, a recent evaluation of OF reconstruction methods \cite{OehlmannHuckemannGottschlich2015}
showed that all existing methods have weaknesses,
and especially in proximity to singular points,
all methods tend to be very inaccurate.
In an analogy, minutiae templates can viewed as a form of lossy fingerprint image compressions.
A survey of methods for reconstructing fingerprint images from minutiae templates 
can be found in \cite{GottschlichHuckemann2014}.
Recently, Shao \textit{et al.} \cite{ShaoWuYongGuoLiu2014}
studied fingerprint image compression by use of dictionaries of fingerprint image patches.
An additional discussion of texture image compression can be found in Section 7.3 in \cite{ThaiGottschlich2016DG3PD}.
The efficiency of XQD models for OF compression will be detailed in Sections \ref{sec:ofcompression} and \ref{sec:results}.

\section{XQD Models} \label{sec:XQDM}

Our methods for manually marking and automatically compressing fingerprint OFs 
are based on the \emph{quadratic differential} (QD) model of Huckemann \ETAL \cite{HuckemannHotzMunk2008}. 
Consequently, we shall outline that model first. 

\subsection{The Quadratic Differential Model} 

The basis of this model is given by a model for the arch type fingerprint. Adding given singular point coordinates, \IE{} cores and delta, 
this can be generalized to model the other fingerprint types. 

The OF of an arch type fingerprint is roughly controlled by two parameters: 
Given a Cartesian coordinate system $(x,y)$ in complex coordinates $z=x+\sqrt{-1}y$, 
the OF is linked to the following complex function
\begin{equation} \label{eq:arch}
P(z) = (z^2-R^2)^{2}
\end{equation}
for $\IM(z)>0$ and, otherwise, $P(z)=1$, as follows. The orientation angle at the coordinate $(x,y)$ can be obtained by 
\begin{equation} \label{eq:QDMAngles}
A(x,y)=0.5\cdot\ARG(P(x+\lambda\sqrt{-1}\cdot y))
\end{equation}
with the main branch of the argument of a complex number taking values in $[-\pi,\pi)$. 
The parameter $R>0$ controls the coordinates of two singularities $(-R,0)$ 
and $(R,0)$ (2nd order zeroes of $P$) along the abscissa and $\lambda>0$ 
is a factor controlling vertical stretching. In \figref{fig:arch} a reasonable fit of the QD model 
to an arch type fingerprint is visualized where $R$, $\lambda$, 
and the rotation and translation of the coordinate system have been adjusted.

\begin{figure}[!ht]
\begin{center}
\subfigure[]{\includegraphics[width=0.475\textwidth]{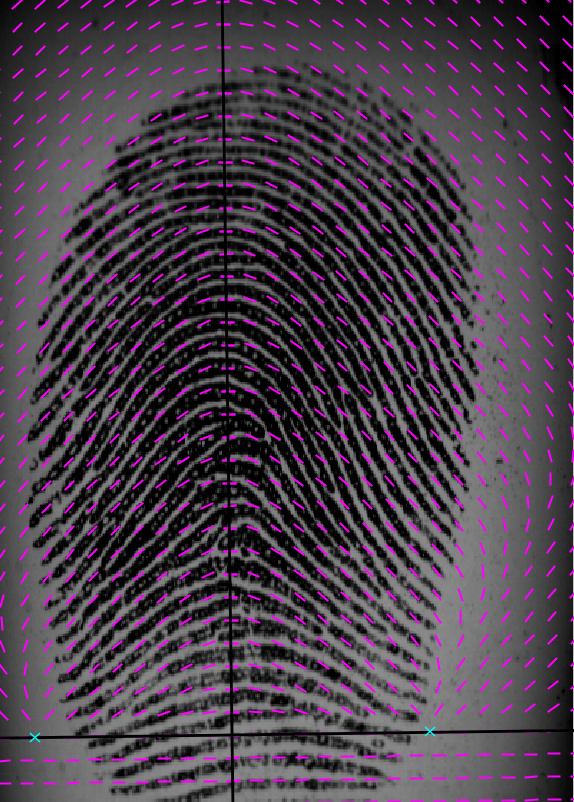}\label{fig:arch}}\hspace{0.01\textwidth}\subfigure[]{\includegraphics[width=0.475\textwidth]{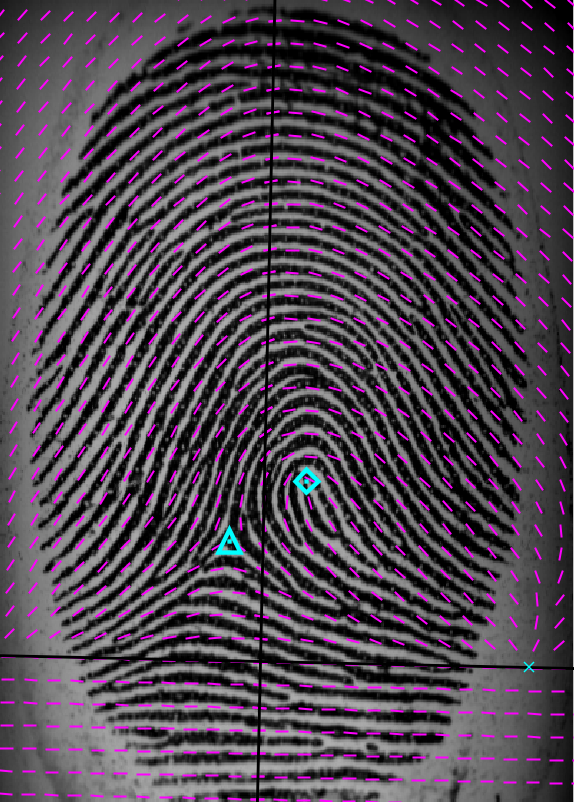}\label{fig:loop}}
\caption{Orientation fields modeled by QD models for an arch type (a) and a loop type (b) fingerprint.}
\label{fig:QDM}
\end{center}
\end{figure}

Fingerprints of other types (such as loops, double loops, and whorls) contain an equal number of deltas and cores --- where a fingerprint cannot contain more than two deltas/cores; note that a whorl can be considered as a double loop in which the two cores agree or are of small distance. The following formula extends \eqref{eq:arch} to also model an OF of a loop type fingerprint of which core and delta coordinates are encoded by the complex $\gamma$ and $\delta$, respectively:
\begin{equation}\label{eq:loop}
P_{\gamma,\delta}(z)=P(z)\cdot\frac{(z-\gamma)(z-\overline{\gamma})}{(z-\delta)(z-\overline{\delta})}
\end{equation}
for $\IM(z)>0$ and, otherwise, $P_{\gamma,\delta}(z)=1$. Here $\overline{z} = \RE(z) - \sqrt{-1}\IM(z)$ denotes the complex conjugate of $z$. An OF of a loop type fingerprint modeled by the QD model is visualized in \figref{fig:loop}. Similarly, a double loop with complex core coordinates $\gamma_1,\gamma_2$ and complex delta coordinates, $\delta_1,\delta_2$ is modeled by the following:
\begin{equation}\label{eq:DoubleLoop}
P_{\gamma_1,\gamma_1,\delta_1,\delta_2}(z)=P(z)\cdot\frac{(z-\gamma_1)(z-\overline{\gamma_1})\cdot(z-\gamma_2)(z-\overline{\gamma_2})}{(z-\delta_1)(z-\overline{\delta_1})\cdot(z-\delta_2)(z-\overline{\delta_2})}
\end{equation}
for $\IM(z)>0$ and, otherwise, $P_{\gamma_1,\gamma_2,\delta_1,\delta_2}(z)=1$. For both models,  \eqref{eq:loop} and  \eqref{eq:DoubleLoop}, orientation angles are computed via \eqref{eq:QDMAngles}. 

For a more comprehensive treatment of the QD model, we refer the reader to \cite{HuckemannHotzMunk2008} and the literature therein on geometric function theory; there the inverse $Q=P^{-1}$ of $P$ is considered giving the 
\emph{quadratic differential} (QD)
$$ \frac{dz^2}{P(z)} = Q(z)\,dz^2\,,$$
the solution curves of $Q(z)\,dz^2>0$ having the orientations from  \eqref{eq:QDMAngles}.
Then, in particular the ``zeroes'' of $P$ are in fact poles of the QD and  ``poles'' of $P$ are zeroes of the QD.

\subsection{Extended Quadratic Differential Model}
As can be seen in \figref{fig:QDM}, the QD model can be used to quite well approximate the general ridge flow using few parameters only.  However, the reader quickly recognizes areas in which the model significantly deviates from the evident ground-truth ridge flow which is an unavoidable effect due to the fact that $P$ in the QD model has only few degrees of freedom. Consequently, we need to change or extend the model. In this paper, we propose to attach a variable number of local correction points to which we refer as \emph{\anglepoints{}} thereby obtaining an \emph{extended quadratic differential} (XQD) model. With these points the local OF modeled by a QD can be corrected to better match with the ridge flow of a fingerprint. 

\subsubsection*{\AnglePoints}

\begin{figure}[!ht]
\begin{center}
\subfigure[]{\includegraphics[width=0.49\textwidth]{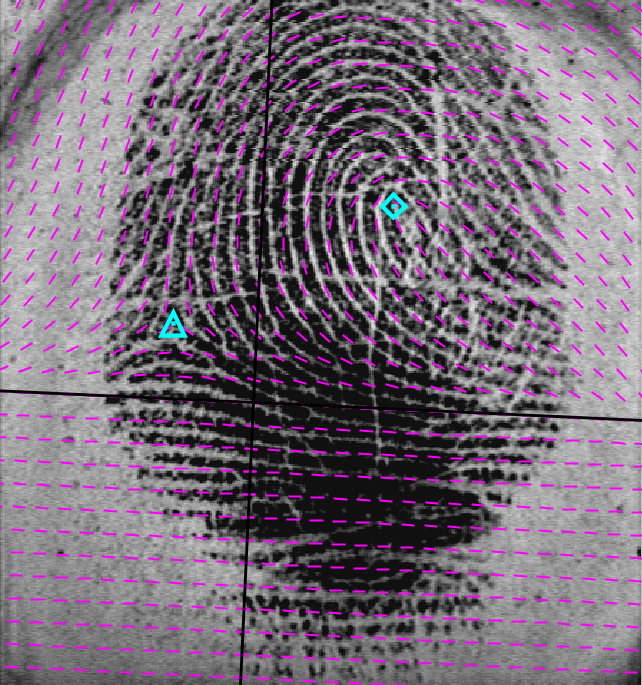}}\hspace{0.0033\textwidth}\subfigure[]{\includegraphics[width=0.49\textwidth]{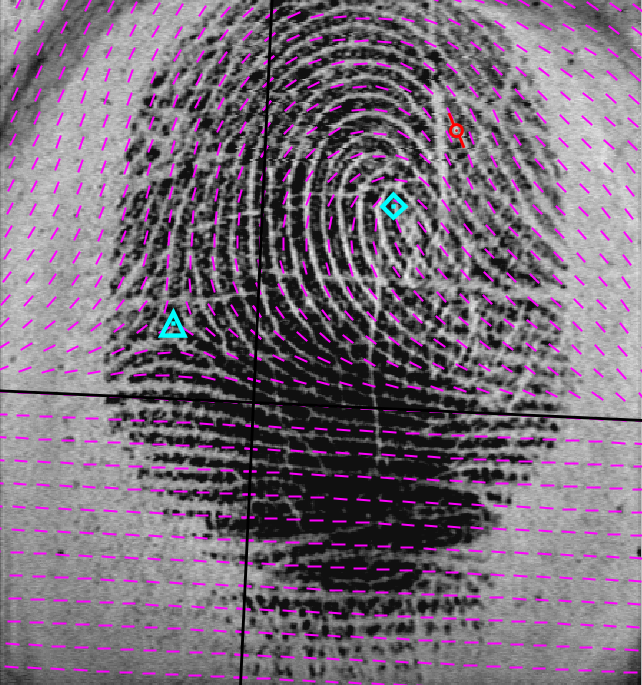}}\\\subfigure[]{\includegraphics[width=0.49\textwidth]{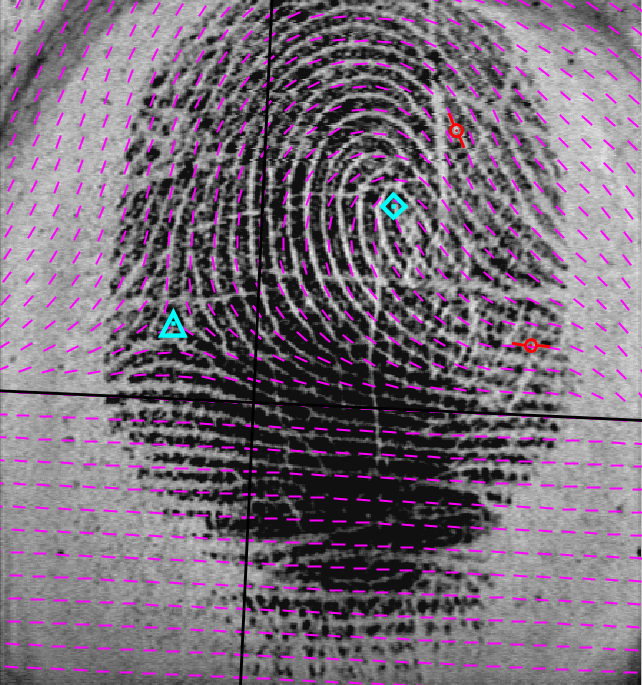}}\hspace{0.0033\textwidth}\subfigure[]{\includegraphics[width=0.49\textwidth]{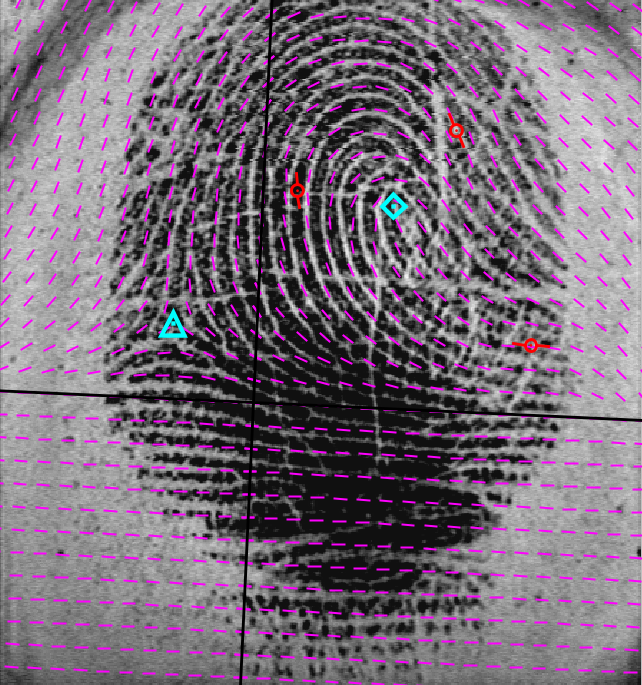}}
\caption{Extending the QD model by a variable number of \anglepoints}
\label{fig:XQDM}
\end{center}
\end{figure}

An \emph{\anglepoint} is a $5$-tuple $(a,b,\theta,\sigma_1,\sigma_2)$ where $(a,b)$ is a two-dimensional coordinate, $\theta$ an orientation angle, and $\sigma_1$ and $\sigma_2$ are two postive numbers. More precisely, $(a,b)$ denotes a coordinate at which the orientation given by a QD model $P(z)$ is to be corrected;  $\theta \in [-\pi/2,\pi/2)$ denotes the orientation angle of the true field at $(a,b)$ which is to become the new orientation angle there; finally, $\sigma_1$ and $\sigma_2$ control how significantly the orientation correction influences the neighboring orientations around $(a,b)$. Even more specifically, given the orientation angles $A(x,y)$ of a QD (see \eqref{eq:QDMAngles}), a true orientation $\theta$ at an \anglepoint{} $p=(a,b,\theta,\sigma_1,\sigma_2)$ the new orientation angles at any coordinate $(x,y)$ is computed as
\begin{equation}
A(x,y;p) = \begin{cases}
A(x,y) + C(x,y;p) - \pi&\text{if }A(x,y) + C(x,y;p)\geq\pi/2\\
A(x,y) + C(x,y;p) + \pi&\text{if }A(x,y) + C(x,y;p)<-\pi/2\\
A(x,y) + C(x,y;p)&\text{otherwise}
\end{cases}
\end{equation}
where $C(x,y;p)\in(-\pi/2,\pi/2]$ denotes a correction angle. The correction angle is defined as
\begin{equation} \label{eq:CorrectionAngle0}
C(x,y;p)=w(x,y;p)\cdot\begin{cases}
\theta-A(a,b)+\pi&\text{if }\theta-A(a,b)<-\pi/2\\
\theta-A(a,b)-\pi&\text{if }\theta-A(a,b)\geq \pi/2\\
\theta-A(a,b)&\text{otherwise}
\end{cases}
\end{equation}
where $w(x,y;p)$ denotes a function that assumes the value $1$ at $(x,y) = (x_p,y_p)$ and decays quickly to zero away from it. Here
$(x_p,y_p)$ is the coordinate $(x,y)$ represented w.r.t. a coordinate system defined by $(a,b)$ (origin) and $\theta$ (rotation); specifically,
\begin{align}
\begin{split}
x_p&=\cos(\theta)\cdot(x-a)+\sin(\theta)\cdot(y-b)+x\\
y_p&=\sin(\theta)\cdot(x-a)-\cos(\theta)\cdot(y-b)+y.
\end{split}
\end{align}
For example $w(x,y;p)$ can be a tent function as in \eqref{tent-fcnt:eq} with $\sigma_1=r=\sigma_2$. To obtain a higher degree of smoothness, in the applications we use  
the two-dimensional Gaussian
\begin{equation}
w(x,y;p)=\exp(-1/2\cdot(x-x_p)^2/\sigma_1^2+(y-y_p)^2/\sigma_2^2)\,.
\end{equation}

\subsubsection*{Multiple \AnglePoints}
Similarly, given the OF $A(x,y)$ of a QD, the correction angle at $(x,y)$ can be defined recursively from a multiple number of \anglepoints{} $p_1,...,p_n$ as
\begin{footnotesize}
\begin{align}
\begin{split}
C(x,y;p_1,...,p_{n})=
\begin{cases}
C(x,y;p_{1},...,p_{n-1})+C(x,y;p_{n})+\pi&\text{if }C(x,y;p_{1},...,p_{n-1})+C(x,y;p_{n})<-\pi/2\\
C(x,y;p_{1},...,p_{n-1})+C(x,y;p_{n})-\pi&\text{if }C(x,y;p_{1},...,p_{n-1})+C(x,y;p_{n})\geq\pi/2\\
C(x,y;p_{1},...,p_{n-1})+C(x,y;p_{n})&\text{otherwise}
\end{cases}
\end{split}
\end{align}
\end{footnotesize}
for $n>1$ and as in \eqref{eq:CorrectionAngle0} for $n=1$. This yields our final XQD model
\begin{footnotesize}
\begin{equation} \label{eq:XQDM}
A(x,y;p_1,...,p_{n})=\begin{cases}
A(x,y)+C(x,y;p_1,...,p_n)-\pi&\text{if }A(x,y)+C(x,y;p_{1},...,p_{n})\geq\pi/2\\
A(x,y)+C(x,y;p_1,...,p_n)+\pi&\text{if }A(x,y)+C(x,y;p_{1},...,p_{n})<-\pi/2\\
A(x,y)+C(x,y;p_1,...,p_n)&\text{otherwise}
\end{cases}.
\end{equation}
\end{footnotesize}
In \figref{fig:XQDM} the effect of correcting a QD model's OF using an increasing number of \anglepoints{} is visualized.

\subsection{Manually Marking of Orientation Fields}
One important application of our XQD model is to manually mark semi-automatically a fingerprint's OF by an expert. From the many choices of orders of tasks, by preliminary experiments, we found the following strategy useful, the steps of which are visualized in \figref{fig:mark}.
\begin{enumerate}
\item\label{step:MarkSPs} Manually mark the position of all cores and deltas of the fingerprint (\figref{fig:MarkSPs}).
\item\label{step:MarkInitialOF} Manually mark an initial OF (possibly at sparse locations only, see \figref{fig:MarkInitialOF}).
\item\label{step:AdjustQDM} Adjust the QD model to the initial OF by minimizing a suitable objective function, given by \eqref{eq:ObjectiveFunction}, say (\figref{fig:AdjustQDM}).
\item\label{step:InsertAPs} Successively insert \anglepoints{} to the XQD model further minimizing the objective function (\figref{fig:InsertAPs}).
\item The final XQD model agrees, within a preselected error bound, say, with the manually marked OF. This and other stopping strategies are discussed and illustrated in Section \ref{sec:results}.
\end{enumerate}

\begin{figure}[!ht]
\begin{center}
\subfigure[mark all cores and deltas]{\includegraphics[width=0.45\textwidth]{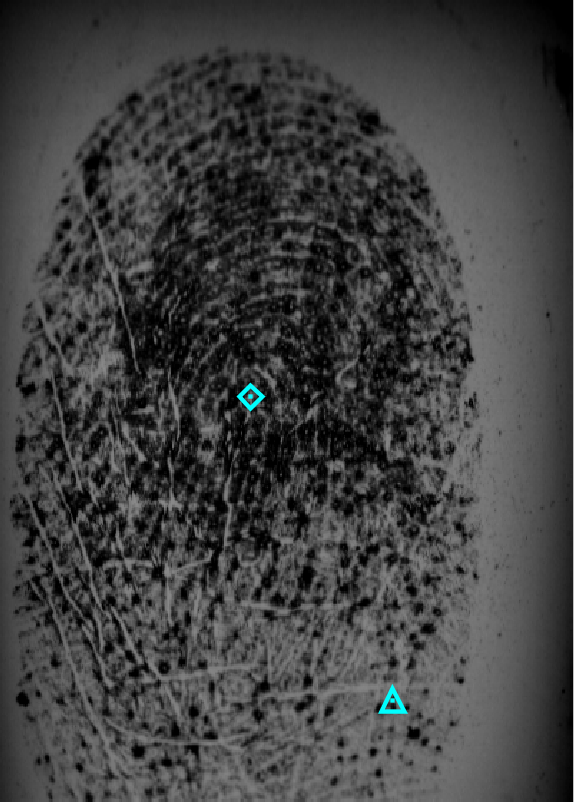}\label{fig:MarkSPs}}\hspace{0.033\textwidth}\subfigure[initial OF]{\includegraphics[width=0.45\textwidth]{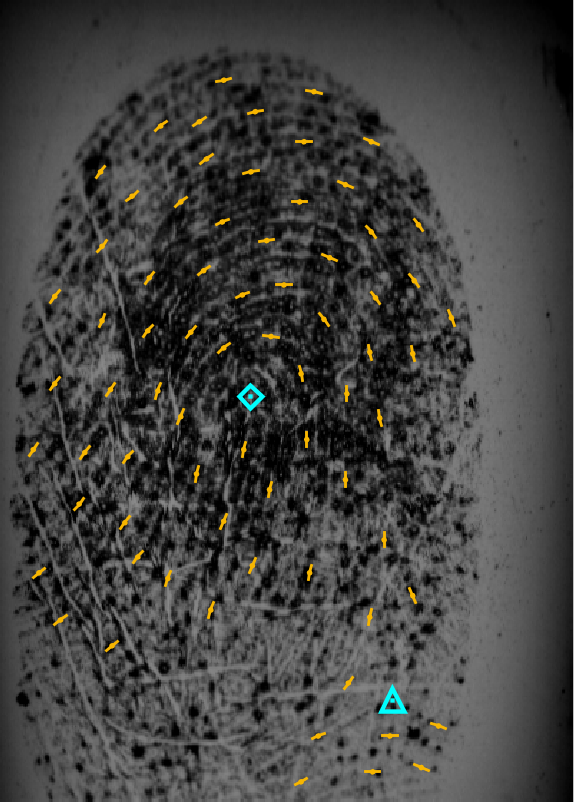}\label{fig:MarkInitialOF}}\\
\subfigure[adjust the QD]{\includegraphics[width=0.45\textwidth]{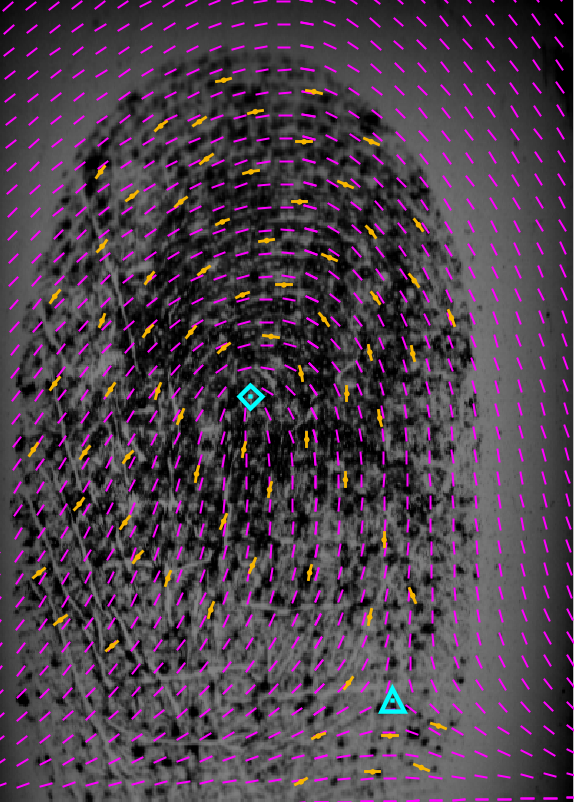}\label{fig:AdjustQDM}}\hspace{0.033\textwidth}\subfigure[insert \anglepoints{} (red)]{\includegraphics[width=0.45\textwidth]{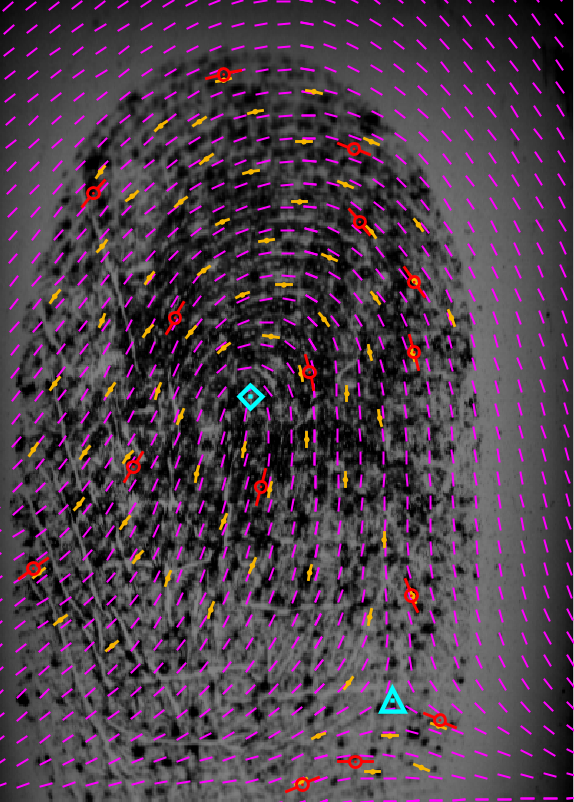}\label{fig:InsertAPs}}
\caption{Manually marking the OF of a fingerprint from FOE and modelling it by XQD.}
\label{fig:mark}
\end{center}
\end{figure}

Given the OF of an XQD model, \IE{} $A(x,y;...)$, and an initial OF $(x_j,y_j,\theta_j)$, we can measure the deviation of the XQD model to the initial OF by the following objective function, which, depends on all parameters of the XQD model:
\begin{eqnarray} \nonumber
\label{eq:ObjectiveFunction}
\kappa(\cdot)&=&\sum_{j}(~\cos(2\cdot A(x_j,y_j;...))-\cos(2\cdot\theta_j)~)^2+(~\sin(2\cdot A(x_j,y_j;...))-\sin(2\cdot\theta_j)~)^2.\\
&=&\sum_{j} \left|e^{2\sqrt{-1}A(x_j,y_j;...)} - e^{2\sqrt{-1}\theta_j}\right|^2\,.
\end{eqnarray}
We note that, if steps \ref{step:MarkSPs} and \ref{step:MarkInitialOF} have been performed manually by an expert, the remaining steps can be implemented to run (semi-)automatically by utilizing a steepest descent method applied to the objective function \eqref{eq:ObjectiveFunction}.

\subsection{Compressing Orientation Fields} \label{sec:ofcompression}

The key property of the XQD model is its ability to compactly represent, while having the power of arbitrarly well approximating, a fingerprint's OF. 
More specifically, recall that we count a total of at most $13$ real parameters describing a QD model: the parameters $\lambda$ and $R$ (see \eqref{eq:arch}) describing size and stretching, a two-fold parameter for translation, and one more parameter for rotation; further, a fingerprint can contain at most two cores and two deltas. As an XQD model is influenced by a variable number of \anglepoints{} each described by $5$ real parameters, an XQD model with $n$ \anglepoints{} consumes a total of at most
\begin{equation} \label{eq:numberParameters}
5 + 2\cdot s + 5\cdot n
\end{equation}
real parameters, where $s$ is the number of singular points.

Given an uncompressed OF, an XQD model can ideally be approximated automatically with a small number of \anglepoints{} to compress the field. In \figref{fig:mark} a manually marked fingerprint (from FOE, here assuming no ground truth OF available) has been modeled by a XQD with $n=15$ anchor points. At this point we stress that the XQD model requires a reasonable estimation of the singular points --- even if they lie outside of the fingerprint's region of interest. Unfortunately, to date there is no method known that robustly estimates all singular points. Beyond that, however, we are able to automatically obtain an XQD model from an OF thereby obtaining an effective method for compressing OFs.

\section{Perfectly Adapting Orientation Fields in the Limit 
}\label{convergence}

    It is general consent that fingerprint OFs are smooth except for the singularities at cores and deltas (e.g. \cite[Section 3.6]{HandbookFingerprintRecognition2009}). In consequence, denoting with
    $$dz_T(z) = e^{\sqrt{-1}A(x,y)}$$
    the complex orientation of the true field at pixel location $z=(x,y)$ and denoting by 
    $$dz_Q(z) = \frac{Q(z)}{|Q(z)|}$$
    the complex orientation of a QD model at $z=(x,y)$  with the same cores and deltas we may assume that there is a Lipschitz constant $L>0$ such that
    \begin{eqnarray}\label{Lipschitz:eq}
     \left| \left(\frac{dz_T(z_1)}{dz_Qz(z_1)}\right)^2- \left(\frac{dz_T(z_2)}{dz_Q(z_2)}\right)^2\right| &\leq& L |z_1-z_2| 
    \end{eqnarray}
    for all $z_1,z_2$ in the observation window,
    while of course
   $$ \frac{\left| dz_T(z_1)^2 - dz_T(z_2)^2 \right|}{|z_1-z_2|}$$
   is unbound near the singularities. 
   
   For the following we assume that we have fit a QD model to a fingerprint's OF with same singularities, such that we can assume (\ref{Lipschitz:eq}) for
   $$f(z) := (dz_T(z)/dz_Q(z))^2\,.$$
   According to the algorithm introduced above, given an approximation $f_n(z)$ to $f(z)$ and a correction function $h(z;p)$, for our convergence considerations here we use not the one given by \eqref{eq:CorrectionAngle0} but a tent function 
        \begin{eqnarray}\label{tent-fcnt:eq}h(z;p,r) &=& \left(1- \frac{|z-p|}{r}\right)^+ \mbox{ where } a^+=\left\{\begin{array}{ll} a&\mbox{ for }a \geq 0\\0 &\mbox{ for }a\leq 0\end{array}\right.\end{eqnarray}
      for suitable $r>0$, we first show that the next iterate 
      $$f_{n+1}(z;p,r) := h(z;p,r) f(p) + \big(1- h(z;p,r)\big) f_n(z)$$ 
      is closer to $f$ than the previous. Building on that we then propose an algorithm, theoretically assuring an asymptotically perfect adaption to the OF.

      \begin{Lem} For fixed location $p$ and radius $r>0$, with the Lipschitz constant $L$ from (\ref{Lipschitz:eq}), the following hold:
      \begin{enumerate}
      \item[(i)] $f_{n+1}(z;p,r) = f_n(z;p,r)$ whenever $|z-p|\geq r$,
      \item[(ii)] if $\big|f(z) - f_{n}(z;p_0,r)\big|<\epsilon$ for some $\epsilon >0$ and $z$ with $|z-p|< r$ then
      $$ \big|f(z) - f_{n+1}(z;p_0,r)\big|< \left(1+\frac{\epsilon}{rL}\right)^2\frac{rL}{4}\,,$$
      \item[(iii)] choosing $r = \mathop{\sup}_z\big|f(z) - f_{n}(z;p_0,r)\big|/L$ we have 
      $$ \begin{array}{lrcll}
         (a)&  \big|f(z) - f_{n+1}(z;p_0,r)\big|&<& \mathop{\sup}_w\big|f(w) - f_{n}(w;p_0,r)\big| &\mbox{ for all }|z-p|< r\,,\\
         (b)&  \big|f(z) - f_{n+1}(z;p_0,r)\big|&<& \frac{1}{2}\,\big|f(z) - f_{n}(z;p_0,r)\big| &\mbox{ for all }|z-p|\leq 0.13\cdot r\,.
         \end{array}$$
      \end{enumerate}
      \end{Lem}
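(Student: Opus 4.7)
Everything rests on a single pointwise decomposition obtained by rearranging the defining formula of $f_{n+1}$:
\begin{equation*}
f(z) - f_{n+1}(z;p,r) \;=\; \bigl(1 - h(z;p,r)\bigr)\bigl(f(z) - f_n(z)\bigr) \;+\; h(z;p,r)\bigl(f(z) - f(p)\bigr).
\end{equation*}
Writing $t := |z-p|/r$, so that $h(z;p,r) = 1-t$ whenever $t \in [0,1]$, and invoking (\ref{Lipschitz:eq}) in the form $|f(z) - f(p)| \le L|z-p| = Lrt$, the triangle inequality produces the master estimate
\begin{equation*}
|f(z) - f_{n+1}(z;p,r)| \;\le\; t\,|f(z) - f_n(z)| \;+\; (1-t)\,Lrt.
\end{equation*}
From here item (i) is immediate: when $|z-p| \ge r$ the tent function vanishes by \eqref{tent-fcnt:eq}, hence $f_{n+1}(z;p,r) = f_n(z)$ directly from the defining convex combination.

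For item (ii) I would substitute the strict hypothesis $|f(z)-f_n(z)|<\epsilon$ into the master estimate to obtain $|f(z)-f_{n+1}(z;p,r)| < \phi(t) := t\epsilon + (1-t)Lrt$, and then observe that $\phi$ is a downward-opening parabola on $[0,1]$ whose unique maximum lies at $t^{\ast} = (\epsilon+Lr)/(2Lr)$ with value
\begin{equation*}
\phi(t^{\ast}) \;=\; \frac{(\epsilon+Lr)^2}{4Lr} \;=\; \frac{Lr}{4}\left(1 + \frac{\epsilon}{Lr}\right)^2,
\end{equation*}
which is exactly the asserted bound. For item (iii)(a) I would specialize to $r = M/L$ with $M := \sup_w |f(w)-f_n(w)|$; then $Lr = M$ and $|f(z)-f_n(z)| \le M$ for every $z$, so the master estimate collapses to $|f(z)-f_{n+1}(z;p,r)| \le tM + (1-t)tM = M\bigl(1 - (1-t)^2\bigr)$, and the strict bound $<M$ is automatic because $(1-t)^2 > 0$ whenever $t<1$, i.e.\ whenever $|z-p|<r$.

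The genuine work lies in (iii)(b). The desired halving $|f(z)-f_{n+1}(z;p,r)| < \tfrac{1}{2}\,e_n(z)$, with $e_n(z) := |f(z)-f_n(z)|$, is algebraically equivalent, via the master estimate, to $(1-t)tM < (\tfrac{1}{2}-t)\,e_n(z)$. The function $t \mapsto (1-t)t/(\tfrac{1}{2}-t)$ is monotone increasing on $[0,\tfrac{1}{2})$ and takes value $\approx 0.306$ at $t = 0.13$, so a sufficient pointwise condition is $e_n(z) > 0.306\,M$ whenever $|z-p| \le 0.13\,r$. Taking $p$ (the $p_0$ of the lemma) to be the argmax of $e_n$, so that $e_n(p) = M$, I would derive such a lower bound by writing $e_n(z) \ge M - C\,|z-p|$, where $C$ is controlled by the Lipschitz constant of $f$ from (\ref{Lipschitz:eq}) together with a Lipschitz constant for $f_n$ tracked through the iteration; the numerical value $0.13$ is then calibrated so that this lower bound clears the $0.306\,M$ threshold with room to spare. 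The main obstacle I foresee is precisely this control on the regularity of $f_n$: a naive bound on the tent-blended update yields a recursion of the form $L_{n+1} \le L + 2L_n$ that is not manifestly bounded, so either a more careful analysis of how the tent support interacts with the inherited Lipschitz structure of $f_n$, or an inductive argument tracking $e_n$ directly rather than $f$ and $f_n$ separately, will be needed to close (iii)(b) cleanly.
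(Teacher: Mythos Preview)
Your treatment of (i), (ii) and (iii)(a) is exactly the paper's: the same pointwise decomposition, the same substitution $t=|z-p|/r$, the same quadratic maximization for (ii), and the same observation that $t(2-t)<1$ for $t<1$ in (iii)(a).

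The divergence is in (iii)(b). You read the right-hand side literally as the \emph{local} error $\tfrac12|f(z)-f_n(z)|$ and correctly note that the master estimate alone cannot deliver this without extra input (an argmax choice of $p$ and Lipschitz control on $f_n$). The paper does none of that. It simply plugs $\epsilon=\sup_w|f(w)-f_n(w)|$ and $rL=\epsilon$ into the very same right-hand side $(1-t)t\,rL+t\epsilon=\epsilon\,t(2-t)$ and evaluates at $t=1-\sqrt{3}/2\approx 0.134$, obtaining the value $\epsilon(1-3/4)=\epsilon/4$; since $t(2-t)$ is increasing on $[0,1]$ this yields $|f(z)-f_{n+1}(z)|<\epsilon/2$ for all $|z-p|\le 0.13\,r$. (The paper writes ``$\epsilon/2$'' for the value at that point; arithmetically it is $\epsilon/4$, which of course only strengthens the conclusion.) This \emph{sup}-based halving is precisely what the subsequent Theorem invokes: ``the approximation error within $B_{0.13\cdot r}(p_j)$ is below $\epsilon_{n-1}/2$''. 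So the printed $\tfrac12|f(z)-f_n(z)|$ in (iii)(b) is a slip for $\tfrac12\sup_w|f(w)-f_n(w)|$; your proposed Lipschitz-tracking programme is aimed at a harder target than the paper actually needs or proves, and the obstacle you foresee (unbounded recursion $L_{n+1}\le L+2L_n$) simply does not arise.
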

      
      \begin{proof}
      The first assertion follows from construction. For the second set  $x=|z-p_0|<r$. Then
       \begin{eqnarray} \label{proof-lem1:eq} \nonumber
       \lefteqn{
        \big|f(z) - f_{n+1}(z;p_0,r)\big|
        }\\ \nonumber
        &=&\Big|h(z;p_0,r)\big(f(z) - f(p_0)\big) + \big(1-h(z;p_0,r)\big)\big(f(z)-f_n(z)\big)\Big|\\  \nonumber
        &\leq & \left(1-\frac{x}{r}\right) x L + \frac{x}{r}\,\big|f(z)-f_n(z)\big|\\ 
        &< &\left(1-\frac{x}{r}\right) x L + \frac{x}{r}\,\epsilon\,.
        \end{eqnarray}
        Taking the maximum of the last expression over $0\leq x\leq r$ yields (ii).
        
        (iii): With the choice for $r$, 
        setting $\epsilon = \mathop{\sup}_z\big|f(z) - f_{n}(z;p_0,r)\big|/L$, the right hand side of (\ref{proof-lem1:eq}) attains its maximum $\epsilon$ at $x=r$ and the value $\epsilon/2$ at $x=r(1-\sqrt{3}/2) \geq 0.13\cdot r$.
      \end{proof}
      
      \begin{Th}
	  With the algorithm of the following proof every fingerprint OF can be perfectly adapted in the limit, i.e.
	  $$ \sup_{z}|f_n(z) - f(z)|\to 0\mbox{ as }n\to \infty\,.$$
      \end{Th}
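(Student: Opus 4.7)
The plan is to iterate the one-step correction from the preceding Lemma in a greedy fashion. At step $n$ I would compute the current supremum error $\epsilon_n := \sup_{z} |f(z) - f_n(z)|$, which is attained by compactness of the observation window together with continuity of $f - f_n$ away from the singularities (the latter being absorbed in the QD factor $dz_Q$), select $p_n$ where this supremum is realised, and apply the tent-function update at $p_n$ with radius $r_n = \epsilon_n/L$ as prescribed by part (iii) of the Lemma. Part (iii)(a) then yields $\epsilon_{n+1} \leq \epsilon_n$, so the sequence $(\epsilon_n)$ decreases monotonically to some limit $\epsilon_\star \geq 0$; part (iii)(b) strengthens this locally by forcing the residual on the disk $B(p_n, 0.13\,r_n)$ to be at least halved, so in particular $|f(p_n) - f_{n+1}(p_n)| \leq \epsilon_n/2$. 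The theorem reduces to showing $\epsilon_\star = 0$.

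To rule out $\epsilon_\star > 0$ I would argue by contradiction. Under this hypothesis $r_n \geq r_\star := \epsilon_\star/L > 0$ for every $n$, so the halving disks share a uniformly positive radius. By compactness the sequence $(p_n)$ has a convergent subsequence $p_{n_k} \to p^\star$, and for $k$ sufficiently large any two consecutive $p_{n_k}, p_{n_{k+1}}$ lie within $0.13\,r_\star$ of each other; in particular $p_{n_{k+1}} \in B(p_{n_k}, 0.13\,r_{n_k})$. Applying Lemma (iii)(b) at the point $z = p_{n_{k+1}}$ right after step $n_k$ delivers $|f(p_{n_{k+1}}) - f_{n_k + 1}(p_{n_{k+1}})| \leq \epsilon_{n_k}/2$. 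If the intervening updates at $p_{n_k + 1}, \ldots, p_{n_{k+1} - 1}$ can be shown not to appreciably inflate this residual at the fixed location $p_{n_{k+1}}$, then $\epsilon_{n_{k+1}} = |f(p_{n_{k+1}}) - f_{n_{k+1}}(p_{n_{k+1}})| \leq \epsilon_{n_k}/2 + o(1)$ as $k \to \infty$, contradicting $\epsilon_n \to \epsilon_\star > 0$.

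The hard part is precisely that last clause. Each intervening update has the convex-combination form $f_{m+1}(z) = h\,f(p_m) + (1-h)\,f_m(z)$ with $h = (1 - |z - p_m|/r_m)^+$, and combining the Lipschitz bound (\ref{Lipschitz:eq}) on $f$ with $r_m = \epsilon_m/L$ yields only $|f(z) - f_{m+1}(z)| \leq h\,L|z - p_m| + (1-h)\,|f(z) - f_m(z)|$, which does not manifestly preserve the halving credit earned at step $n_k$. A cleaner implementation, and the one I would ultimately pursue, organises the algorithm into rounds: at the start of each round one covers the observation window by a finite $(0.13\,r_n)$-net of anchor locations and processes them in sequence, so that one round halves the supremum error by applying Lemma (iii)(b) at every point of the net; iteration over rounds then forces $\epsilon_n \downarrow 0$ as claimed.
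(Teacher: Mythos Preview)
Your proposal converges to the right architecture: you abandon the greedy one-anchor-at-a-time scheme, organise the algorithm into rounds built on a $(0.13\,r_n)$-net, and aim for a contradiction if $\epsilon_n \to \epsilon_\star > 0$. This is exactly the skeleton of the paper's proof. But the gap you yourself flag in the greedy version -- that later updates may erode the halving credit earned at an earlier anchor -- does \emph{not} vanish just because the anchors are now arranged in a net. Within a round the updates are still processed sequentially, so the ball $B_{0.13\,r}(p_1)$, having its error cut to $\epsilon_{n-1}/2$ at step~1, is then touched by up to $k(r)-1$ further tent-updates centred elsewhere, each of which can raise the error there. Your sentence ``one round halves the supremum error by applying Lemma~(iii)(b) at every point of the net'' is therefore unjustified and in fact false as stated.

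What is missing is a pointwise application of Lemma~(ii): with $r = \epsilon_{n-1}/L$, if the current error at some $z$ is $\alpha\,\epsilon_{n-1}$ with $\alpha \in (0,1)$, then a further update anywhere carries it to at most $\phi(\alpha)\,\epsilon_{n-1}$ with $\phi(\alpha) = (1+\alpha)^2/4$. Since $\phi$ maps $(0,1)$ into itself, the error at $z$ after the whole round is at most $\phi^{k(r)-1}(1/2)\cdot \epsilon_{n-1} < \epsilon_{n-1}$ -- strictly smaller, but not in general $\leq \epsilon_{n-1}/2$. The contradiction then runs as follows: if $\epsilon_\star > 0$, the number $k(r_n)$ of anchors per round is bounded by a fixed $N = O(\epsilon_\star^{-2})$, so each round contracts the error by at least the fixed factor $\phi^{N}(1/2) < 1$, forcing $\epsilon_n \to 0$. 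This is the step the paper supplies and your proposal still lacks.
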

      
      \begin{proof}
       We detail one iteration step of the algorithm and then show its convergence. Suppose after the $(n-1)$-st iteration, $n\geq 0$, we have an approximation $f_{n-1}$ with
      $\epsilon_{n-1} =  \mathop{\sup}_z\big|f(z) - f_{n-1}(z)\big|$. Set $r=\epsilon_{n-1}/L$ and place a finite number $k(r)= O(1/r^2)$ of anchor points $p_1,\ldots,p_{k(r)}$ such that 
      $$\bigcup_{j=1}^{k(r)} B_{0.13\cdot r}(p_j)$$
      covers the fingerprint area, here $B_{\rho}(p) = \{z: |z-p|< \rho\}$. Setting $g_0:=f_{n-1}$, define  
      $$g_{j}(z) := h(z;p_{j},r) g(p_j) + \big(1- h(z;p_{j},r)\big) g_{j-1}(z),\quad j=1,\ldots,k(r)$$ and $f_n := g_{k(r)}$.
      In every step $j=1,\ldots,k(r)$, due to (iii) (b) of the above Lemma, the approximation error within $B_{0.13\cdot r}(p_j)$ is below $\epsilon_{n-1}/2$, everywhere else, due to (iii) (a), the error will still be bound by $\epsilon_{n-1}$. According to (ii), the next iteration will change the error within $B_{0.13\cdot r}(p_j)$ to below $9/16 \cdot\epsilon_{n-1}$. Since the mapping $\phi(\alpha) = (1+\alpha)^2/4$ maps the interval $(0,1)$ to itself, after at most $k(r)\leq \lfloor C/\epsilon_{n-1}^2\rfloor$ iterations, with some constant $C>0$ independent of $\epsilon_{n-1}$, we have
      $$ \sup_z\big|f_{n}(z) - f(z)\big| < g^{\lfloor C/\epsilon_{n-1}^2\rfloor}(0.5) 
      \cdot \epsilon_{n-1} =: \epsilon_n < \epsilon_{n-1}\,.$$
      
      Now suppose that the sequence $\epsilon_0 > \epsilon_1 > \epsilon_2 > \ldots$ would not converge to zero but to to $\epsilon_\infty >0$. Then due to $g^{\lfloor C/\epsilon_\infty^2\rfloor}(0.5)< 1$ and pointwise monotonicity in iterates,
      $$ \epsilon_{k+1} =  g^{\lfloor C/\epsilon_k^2\rfloor}(0.5)\cdot \epsilon_k < \left(g^{\lfloor C/\epsilon_\infty^2\rfloor}(0.5)\right)^k\cdot \epsilon_0 \to 0\mbox{ as }k\to\infty\,,$$
      yielding a contradiction. This proves that every OF can be assymptotically perfectly adapted.
      \end{proof}

\section{Compression Results} \label{sec:results}

Here we report compression results using the ten good quality OFs provided by \cite{TurroniMaltoniCappelliMaio2011} as ground truths. 
As detailed in Section \ref{sec:ofcompression}, we have first manually marked singular points
and afterwards automatically fit XQD models employing the following several optimization strategies.
Stopping criteria and specific improvement steps in each iteration depend on the choice of to the main goal 
which can be:

\begin{itemize}
 \item As fast as possible (minimal runtime) in order to achieve a small deviation of the reconstructed OF from the ground truth.
 \item As exact as possible (minimal deviation from the ground truth OF) where we allow e.g. at most $20$ anchor points. 
 \item As compressed as possible (minimal file size of the stored XQD) 
 \item As sparse as possible (minimal number of anchor points)  
\end{itemize}

Note that in consequence of (\ref{eq:numberParameters}) the model's sparsity relates directly
to the compression rate: Minimizing the number of anchor points is
equivalent to a aiming for high compression, see Table \ref{table:compress} and \ref{table:compressFilesize}.
At every iteration step several choices are possible. 
One may optimize speed by simply adding a few anchor points without optimizing all possible parameters (e.g. strategy S1). 
Alternatively, when accuracy is optimized (e.g. strategy S4), in every iteration step not only all present anchor points are optimized 
but as well the choice of singular points and the other parameters of the underlying QD model are reconsidered. 
Balancing the three main goals of speed, compression rate and accuracy of the reconstructed OF allows for a range of intermediate strategies (e.g. S2 and S3).
Results for four example strategies using a grid spacing of 12 pixels for ground truth orientation locations are reported in \tableref{table:compress}.

\begin{table}[h!]
{\footnotesize
$$
\begin{array}{c|ccc|ccc|ccc|}
& \multicolumn{3}{c|}{\mbox{deviation}}& \multicolumn{3}{c|}{\mbox{runtime}}& \multicolumn{3}{c|	}{\mbox{anchor points}}\\
& \multicolumn{3}{c|}{\mbox{(degrees)}}& \multicolumn{3}{c|}{\mbox{(seconds)}}& \multicolumn{3}{c|}{\mbox{(number)}}\\

\mbox{strategy}  & \mbox{min}&\mbox{median}&\mbox{max} & \mbox{min}&\mbox{median}&\mbox{max} & \mbox{min}&\mbox{median}&\mbox{max}\\ \hline

S1& 4.3 & 4.7 & 4.9 & 1.0 & 4.4 & 8.9 & 1 & 2 & 3\\
S2& 3.5 & 4.1 & 4.8   & 1.2 & 5.8 & 23.2 & 1 & 3 & 8\\
S3 & 3.1 & 3.5 & 4.2 & 3.4 & 13.3 & 100.0 & 3 & 6 & 20\\
S4& 1.0 & 1.5 & 1.8  & 130.5 & 180.0 & 230.0   &20&20&20

\end{array}
$$
}
\caption{Employing four strategies for OF compression on the FOE's \cite{TurroniMaltoniCappelliMaio2011} good quality fingerprint images. 
Runtime has been evaluated using a single core of at a 2.8 GHz processor.}
\label{table:compress}
\end{table}

\begin{table}[!ht]
{\footnotesize
$$
\begin{array}{c|ccc|ccc|ccc|}
& \multicolumn{3}{c|}{\mbox{file size XQD}}& \multicolumn{3}{c|}{\mbox{compression factor}}& \multicolumn{3}{c|	}{\mbox{compression factor}}\\
& \multicolumn{3}{c|}{\mbox{(bytes)}}& \multicolumn{3}{c|}{\mbox{BMP to XQD}}& \multicolumn{3}{c|}{\mbox{PNG to XQD}}\\

\mbox{strategy}  & \mbox{min}&\mbox{median}&\mbox{max} & \mbox{min}&\mbox{median}&\mbox{max} & \mbox{min}&\mbox{median}&\mbox{max}\\ \hline

S1& 73  & 103 & 113 & 1975 & 2193 & 3083 & 194 & 240 & 344\\
S2& 73  & 129 & 229 & 940  & 1719 & 3083 & 113 & 195 & 277\\
S3& 113 & 181 & 453 & 497  & 1222 & 1992 & 57  & 141 & 191\\
S4& 437 & 453 & 469 & 459  & 497  & 515  & 45  & 53  & 57

\end{array}
$$
}
\caption{Employing four strategies for OF compression on the FOE's \cite{TurroniMaltoniCappelliMaio2011} good quality fingerprint images. 
For example, compressing a file of size 2000 bytes to a file of size 10 bytes would correspond to a compression factor of 200.}
\label{table:compressFilesize}
\end{table}

\section{Conclusion} \label{sec:conclusion}

In this work we have presented a semi-automatic tool based on a comprehensive XQD model for the orientation field of fingerprints, 
that achieves arbitrary precision at a very high compression rate in rather short time. 
A compression by a factor of $200$, say, at an accuracy of a few degrees in a few seconds (see \tableref{table:compress} and \ref{table:compressFilesize})

This semi-automatic tool can also be used for fast marking of orientation fields of fingerprints, 
be it for forensic application or in order to generate large orientation field benchmark databases. 
After labeling (or accepting the tool's proposals) of singular points, location, orientation and scaling of a fingerprint image, 
a very sparse representation of the orientation field with arbitrary precision is fast and automatically built. 

While in order to give a proof of concept, we have used the benchmark FOE dataset, 
in future work orientation field estimation methods (e.g. \cite{Gottschlich2012}) can be combined with our XQD model allowing for a  
'next generation comprehensive low-dimensional fingerprint template' consisting of minutiae 
plus segmentation (e.g. \cite{ThaiHuckemannGottschlich2016}) plus anchor points (XQD) 
plus at most 13 parameters (QD). One may even consider to place the anchor points at minutiae locations, 
then only their $\sigma$s need to be recorded. 
Additionally, ideal locations of anchor points -- these give the deviation from a conformal QD model -- deserve to be studied over large databases.

\section*{Acknowledgements}

The authors gratefully acknowledge the support of the 
Felix-Bernstein-Institute for Mathematical Statistics in the Biosciences 
and the Niedersachsen Vorab of the Volkswagen Foundation. 
Stephan Huckemann expresses gratitude to the support by the SAMSI Forensics Workshop 2015/16.

\end{document}